\setlist[itemize]{leftmargin=*}
\setlist[enumerate]{leftmargin=*}
\newtheorem{thm}{Theorem}
\newtheorem{lemma}{Lemma}
\newtheorem{cor}{Corollary}
\newtheorem{defn}{Definition}
\newtheorem{rem}{Remark}
\newcommand{\Sym}{\text{Sym}_n(\mathbb{R}_{\ge 0})}
\title{Unsupervised Metric Learning In Presence of Missing Data}
\author{
  Anna C.~Gilbert \\
  Department of Mathematics\\
  University of Michigan - Ann Arbor\\
  Ann Arbor, MI 48104 \\
  \texttt{annacg@umich.edu} \\
  \and
  Rishi Sonthalia \\
  Department of Mathematics\\
  University of Michigan - Ann Arbor\\
  Ann Arbor, MI 48104 \\
  \texttt{rsonthal@umich.edu}\\
}
\begin{document}

\maketitle
\thispagestyle{empty}
\pagestyle{empty}

\begin{abstract}

For many machine learning tasks, the input data lie on a low-dimensional manifold embedded in a high-dimensional space and, because of this high-dimensional structure, most algorithms inefficient. The typical solution is to reduce the dimension of the input data using a standard dimension reduction algorithms such as {\sc Isomap, Laplacian Eigenmaps} or {\sc LLEs}. This approach, however, does not always work in practice as these algorithms require that we have somewhat ideal data. Unfortunately, most data sets either have missing entries or unacceptably noisy values. That is, real data are far from ideal and we cannot use these algorithms directly.

In this paper, we focus on the case when we have missing data. Some techniques, such as matrix completion, can be used to fill in missing data but these methods do not capture the non-linear structure of the manifold. Here, we present a new algorithm {\sc MR-Missing} that extends these previous algorithms and can be used to compute low dimensional representation on data sets with missing entries. We demonstrate the effectiveness of our algorithm by running three different experiments. We visually verify the effectiveness of our algorithm on synthetic manifolds, we numerically compare our projections against those computed by first filling in data using nlPCA and mDRUR on the MNIST data set, and we also show that we can do classification on MNIST with missing data. We also provide a theoretical guarantee for {\sc MR-Missing} under some simplifying assumptions.
\end{abstract}

\section{Introduction}


Many real world data sets can be reasonably modeled as low dimensional manifolds embedded in much higher dimensional spaces, and for these models, a class of techniques play a crucial role in revealing or learning these intrinsic manifolds. This class includes {\sc Isomap}~\cite{Tenenbaum2000}, Local Linear Embedding ({\sc LLE})~\cite{Roweis2000}, {\sc Hessian-LLE}~\cite{Donoho2003}, {\sc Maximum Variance Unfolding}~\cite{Weinberger2004}, {\sc KNN-Diffusion}~\cite{Coifman2006}, and {\sc Laplacian Eigenmap}~\cite{Belkin2003}. All of these algorithms have the basic structure shown in Figure~\ref{fig:manifold_learning_structure}.  Given data, we compute a distance matrix (alternatively, we are given a distance matrix), from which we determine neighborhoods about each data point. Some of these algorithms find the $K$ closest points to each data point and others determine the data points in an $\epsilon$ neighborhood about each data point. Each algorithm then uses this neighborhood information to compute local Euclidean coordinates in some fashion (i.e., to learn the underlying manifold) and, thus, to determine a low-dimensional representation for the data.

\begin{figure}[h!]
\includegraphics[width = \linewidth]{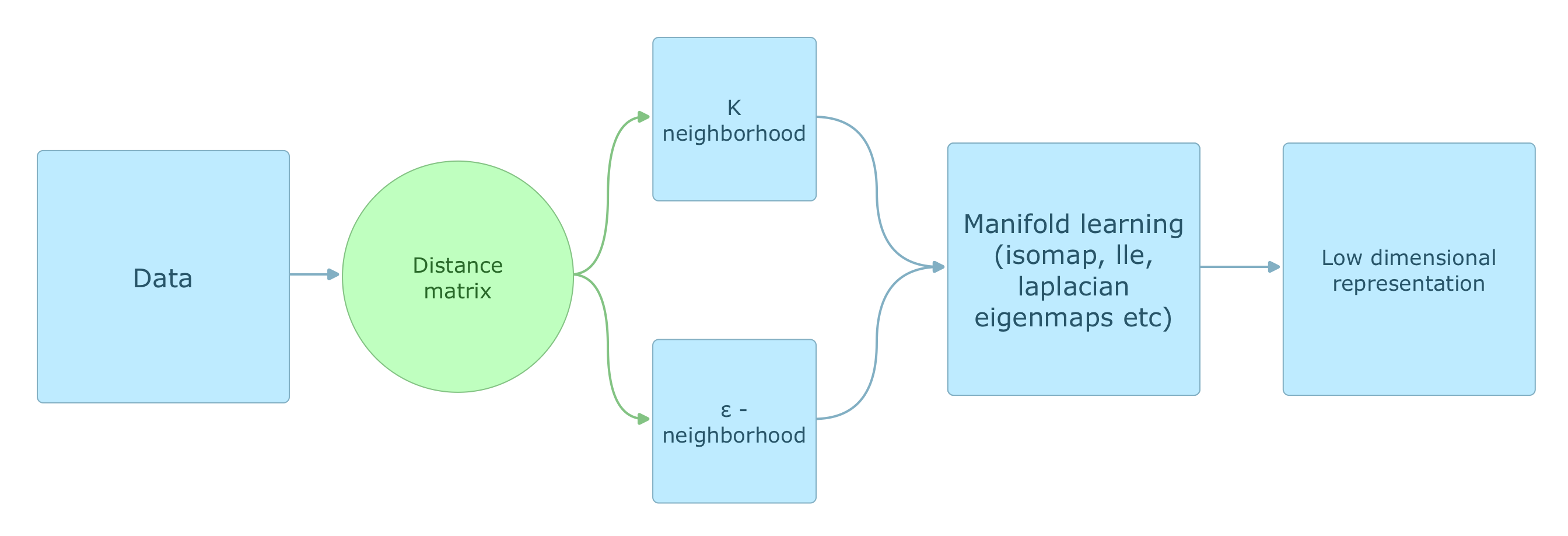}
\caption{General Manifold learning procedure}
\label{fig:manifold_learning_structure}
\end{figure}

Each of these algorithms assumes that the given or computed distance matrix adheres to a metric. To see the importance of using a distance matrix that satisfies a metric with these algorithms, we show in Figure~\ref{fig:swissroll} the impact upon the manifold returned by {\sc Isomap} when we corrupt or perturb the distance matrix so that it no longer satisfies a metric. Note that these algorithms are robust to (small) perturbations in the \emph{data} but not in the \emph{distances} among the data points. In the top left figure (a), we have the original swissroll dataset with 2,000 points. It is a two dimensional manifold embedded in three dimensions. When we run {\sc Isomap} on the true distance matrix, we see in the upper right figure (b), an ``unrolled'' version of the intrinsic manifold. To corrupt the distance matrix, we add i.i.d.~Gaussian noise $\sim \mathcal{N}(0,0.01)$ to each non diagonal entry of the distance matrix. We then replace all negative entries with zero and preserve symmetry by averaging the corrupted distance matrix with its transpose. The perturbed distances may not satisfy the triangle inequality and, hence, the corrupted distances may not adhere to a metric. 

The lower left figure (c) is the embedding from the corrupted distance matrix. It is considerably different from the original embedding, points are missing, and there is no apparent lower dimensional manifold structure at all. Because the distances do not satisfy a metric, the points all collapse to one location and cannot be distinguished in the figure.  In the lower right figure (d), we first repair the corrupted distance matrix, using a metric repair algorithm from~\cite{Gilbert2017}, and then embed the data using {\sc Isomap}. The resulting embedding is much closer to the original embedding, with some minor distortion. Thus, we can see that unless the distance matrix satisfies a metric, dimension reduction or manifold embedding algorithms fail catastrophically.

\begin{figure}[h!]
\centering
\subfloat[Original swissroll data set]{\includegraphics[width=0.4\linewidth]{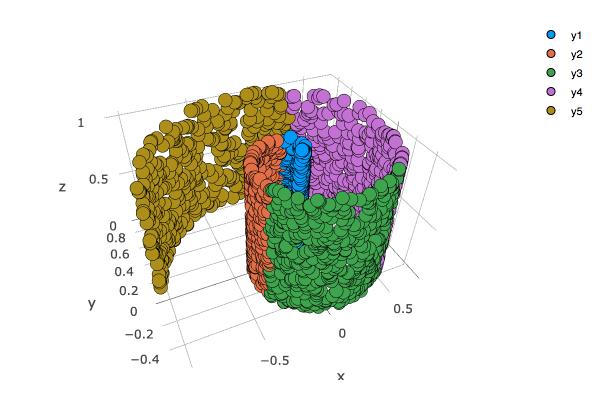}}
\subfloat[true distance matrix]{\includegraphics[width = 0.4\linewidth]{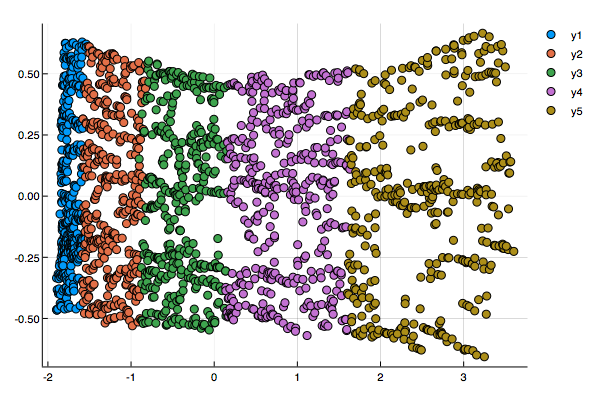}} \\
\subfloat[corrupted distance matrix]{\includegraphics[width = 0.4\linewidth]{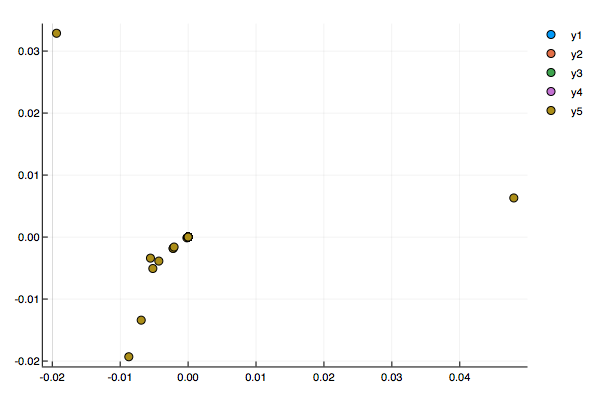}}
\subfloat[repaired distance matrix]{\includegraphics[width = 0.4\linewidth]{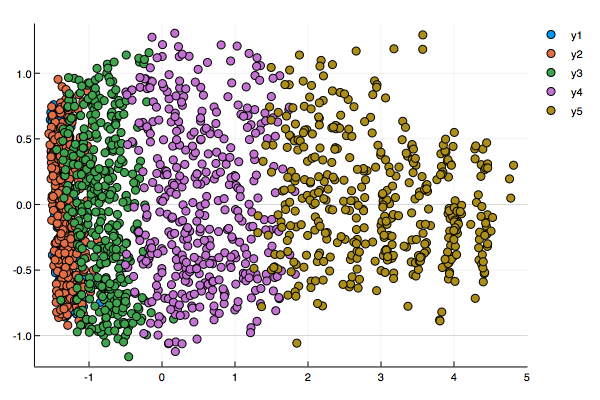}}
\caption{(a) The original swissroll data set (2000 points) and the results from {\sc Isomap} for: (b) the original distance matrix, (c) the corrupted distance matrix, and (d) the repaired distance matrix.}
\label{fig:swissroll}
\end{figure}

\subsection{Problem Set Up}

This example illustrates the main problem we address: if we have either missing data or missing or corrupted entries in the distance matrix and we assume that the data come from an intrinsic low dimensional manifold, compute a low dimensional representation of the imputed or corrected data set.   One such approach for missing data is to complete the data matrix using a matrix completion algorithm. These algorithms assume that the data matrix is approximately low rank and fills in the missing entries accordingly. These algorithms fit the data to a {\em linear} subspace rather than an intrinsically {\em nonlinear} embedding and may miss key features of the data. Other methods that learn the intrinsic low dimensional structure in a data set impute missing data values in the original space but one cannot use other, potentially better, algorithms for the embeddings. Our method aims to repair the distance matrix of the data, so as to extend existing embedding algorithms such as {\sc Isomap} and {\sc Laplacian Eigenmaps} to handle missing data or corrupted distances.

To be precise, let $X$ be the high-dimensional data set and $D$ the distance or dissimilarity matrix amongst the data points, and consider the following four problem scenarios:
\begin{enumerate} 
\item The data set $X$ has corrupted entries, 
\item The data set $X$ has missing entries,
\item The dissimilarity matrix $D$ has corrupted entries, or
\item The dissimilarity matrix $D$ has missing entries. 
\end{enumerate}

In the first model $X$ has added noise and many of the traditional algorithms are robust and produce satisfactory results. Hence, we shall focus on the second scenario where $X$ has missing entries. The second scenario covers the last two (as missing data corrupt the distances between points) and we sketch the applications of our methods to these scenarios and leave a more in depth analysis of those models as future work.

\subsection{Previous work}

There are two main approaches to filling in missing manifold data that we summarize below. Both methods strive to learn, in an unsupervised fashion, a representation of the data and then to use that learned representation to fill in the missing values. The first method employs a low-dimensional representation as an intermediary step in the overall data representation while the second method directly learns a low-dimensional representation. We will also discuss how matrix completion algorithms could be used for model 4\\

\noindent\textbf{Non-linear Principle Component Analysis (nlPCA)~\cite{nlPCA}.} This method is a non-linear analog to principle component analysis. The idea is to use a five-layer neural network with architectural dimensions $n \times m \times d \times m \times n$, where $n$ is the input dimension, $m$ is usually bigger than $n$, and $d$ is the desired dimension of the embedding. We train the neural network to learn the identity map so that the middle layer with $d$ neurons is the low-dimensional representation of the data. To extend to missing data, the network is also trained to reproduce the input data but during the training procedure, any gradients that depend on missing values are disregarded. Then, to fill in the missing data, the data with missing values is input to the network and the output values are used to fill in any missing data. \\

\noindent\textbf{Missing Data Recovery through Unsupervised Regression (mDRUR)~\cite{mDRUR}.} The second major method is the missing data recovery through unsupervised learning (mDRUR). Similarly to the nlPCA algorithm, this algorithm is an extension of a dimensionality reduction algorithm known as dimensionality reduction through unsupervised learning (DRUR). Using the notation of~\cite{mDRUR}, let $Y$ be the high dimensional representation of the data and $X$ the low dimensional representation. Then we have two maps $f,F$ such that $Y = f(X)$ and $X = F(Y)$. To learn the low-dimensional representation, we minimize 
\[ 
	\arg \min_{X,f,F} \|Y - f(X)\|_F^2 + \|X - F(Y)\|_F^2 + \lambda_f R(f) + \lambda_FR(F) 
\] 
where $R(f)$ and $R(F)$ are regularization terms. To fill in missing data, we first use a linear matrix completion method to fill in the missing values, then we use a spectral method to compute $X$. Finally, we learn the low-dimensional representation as above and optimize over the missing values of $Y$ to fill in the missing data. 

Both algorithms are primarily dimensionality reduction algorithms and, as such, we must first fix the lower dimension and then solve an optimization problem. In most real world applications we do not know the optimal low dimension and, hence, in order to fill in the missing data, we must first find this dimension and then fill in the data, rather than separating these two tasks. Furthermore, the imputed values depend on the computed, specific reduced representation; we cannot avail ourselves of a variety of dimension reduction algorithms and obtain what we hope to be a consistent or robust approximation of the missing values.  \\

\noindent\textbf{Euclidean distance matrix and metric completion.}
One might be tempted to restrict our distances to Euclidean distances as it is well known~\cite{Gower1985} that Euclidean distance matrices (with squared Euclidean distance entries) are low rank matrices with rank at most $r+2$ if $r$ is the dimension of the space in which the points lie. Hence, the problem of Euclidean matrix completion can be solved using standard low rank matrix completion algorithms. Additionally, this specific problem has been further studied with many successful algorithms in~\cite{EDC1,EDC2,EDC3}.

In some cases we want our original data to follow a non-Euclidean metric. For example, it is has been shown that for the MNIST dataset if we use the tangent distance metric instead of Euclidean, then $k$ nearest neighbor classifiers have better performance. In this case, if we have the local neighborhood information for the data we can still run {\sc Isomap} and other various algorithms to get lower dimensional representations.  Gower~\cite{Gower1985} showed that these matrices are either low rank (with the same low rank condition as before) or have full rank. In the case that they have full rank, we can no longer use matrix completion algorithms. 

Even when we have a low rank matrix, we only know we can complete these matrices with high probability if the entries present are sampled according to a certain distribution. Finally, even if we can successfully apply these matrix completion algorithms, there are no guarantees that the resulting distance matrices satisfy a metric.

\subsection{Our approach and contributions}

We focus on the second scenario. We separate the problem into three steps. First we estimate distances between the data points, we then correct these distances so that they adhere to a metric, finally we run a suitable dimension reduction algorithm. 
We use the Increase Only Metric Repair (IOMR) algorithm in Gilbert and Jain~\cite{Gilbert2017} to repair the inaccurate distance matrices. As we can see in Figure~\ref{fig:manifold_learning_structure}, all of the dimension reduction algorithms depend on the local distances and not on the actual data points themselves. Hence, filling in the missing data is both costly and unnecessary. Instead, we first estimate the distance matrix from the incomplete data, then we correct it. This approach has two advantages over the previous methods

\begin{itemize} 
\item  No parameters: Our algorithm has no parameters that need tuning. Hence making it faster and easier to train compared to nlPCA and mDRUR. Additionally, our algorithm is quadratic is the number of dimensions. Hence, its performance scales well with number of dimensions. 
\item Accuracy: The manner in which we estimate the distances and then correct them is geared to exactly preserve the local structure. 
\end{itemize}

\begin{rem} For the most general version of scenario three, fast approximation algorithms for the general metric repair problem do not currently exists. A class of approximation algorithms can be found in Gilbert and Sonthalia~\cite{GilbertSonthalia:GraphMetric2018}.
\end{rem}

The rest of the paper is organized as follows, Section 2 presents background knowledge, Section 3 presents our algorithms for correcting a corrupted distance matrix so as to produce an accurate low dimensional embedding of a data set. We focus specifically on missing data. Section 4 provides our experimental results. 
\section{Background}

\subsection{Manifolds and Geodesic distances}

\begin{defn} \label{defn:mani} $M \subset \mathbb{R}^n$ is called a $d$ dimensional manifold is for all $x \in M$ there exists an $\epsilon > 0$ such that there if a continuous bijective function $f$ from $N = \{ y \in M : \|x-y\| < \epsilon\}$ to $\mathbb{R}^d$ such that the inverse is continuous as well. 
\end{defn}

Intuitively the above definition says that if we look at a $d$-dimensional manifold $M$ and if we zoom in close enough to any point then it looks like we are in $\mathbb{R}^d$. For example the swiss roll (from the introduction) is a two-dimensional manifold because near any point it looks like a plane. As we can see from the definition itself the local structure of a manifold is important. Hence, all dimensionality reduction algorithms start by computing the local neighborhood of each point. This is done in one of two ways:
\begin{itemize}
\item determine the $k$ nearest neighbors, or
\item compute the neighbors within some distance $\epsilon$
\end{itemize}

These local neighborhoods then overlap to describe the general manifold structure. Hence, having this correct local structure is crucial for the success of any of the dimensionality reduction algorithms. 

In this paper we will we focus on using {\sc Isomap} (though we could have picked any of the other algorithms). For {\sc Isomap} once we have the graph (i.e., two data points are adjacent if one is in the local neighborhood of the other) we then compute the shortest distance (along this graph) between all the points. This usually done using the Floyd Warshall algorithm. We store these distances in a matrix $\tilde D$. These distances are known as geodesic distances. They are the distances we want between our data points in the low dimensional representation. 

\subsection{Multidimensional Scaling}

The technique used to go from the inferred distance matrix $\tilde D$  to an actual embedding is called multidimensional scaling (MDS) and we include this discussion to complete Section~\ref{sec:experiments}. An important point to note is that the points we recover are not unique, since rotating and translating the embedding will not change the pairwise distances between the points. 

Suppose we have $n$ by $n$ distance matrix $\tilde D$. We want to find points $x_1, \hdots, x_n$ in some $d$ dimensional Euclidean space such that ${\rm dist}(x_i,x_j) = \tilde D_{ij} = \tilde D_{ji}$. Let us define 
\[ 
	S = -\frac{1}{2}\left(I - \frac{1_n 1_n^T}{n}\right) \tilde D \circ \tilde D 
	     \left(I - \frac{1_n 1_n^T}{n}\right) 
\] 
where $1_n$ is the $n$ dimensional vector of all 1s and $\circ$ is the Hadamard product, which we multiply the two matrices coordinate wise.. Then using the fact that the points are translation invariant (we can assume the centroid of $x_1, \hdots, x_n$ is the origin) using which it can be shown that $S_{ij}$ would then be $x_i \cdot x_j$. This matrix is now positive semi-definite, hence has an eigenvalue decomposition \[ S = U \Lambda U^T \] Where $\Lambda$ is a diagonal matrix of the eigenvalues. If we then define $X = U \Lambda^{0.5}$. Then this is our embedding. We get a $d$ dimensional embedding by using only the biggest $d$ eigenvalues. 

\subsection{Metric Repair}

Gilbert and Jain in \cite{Gilbert2017} defined the \emph{sparse metric repair} problem. They define $\Sym$ to be the set of positive real symmetric matrices. More generally, let us define Sym$_n(S)$ to be the set of symmetric matrices with entries drawn from $S$. Then, for any matrix $D \in \Sym$ we say it satisfies a metric if the diagonal of $D$ is all 0s and for all $i,j,k$ we have that $D_{ij} \le D_{ik} + D_{kj}$. 

The sparse metric repair problem seeks a solution to the following optimization problem: Given $D \in \Sym$ and $S \subset \mathbb{R}$ 
\begin{equation} 
\label{eq:sparse_opt}
    {\rm argmin} \|P\|_0 \,\, \text{s.t. } D+P \text{ is a metric and } P \in \text{Sym}_n(S), 
 \end{equation} 
 where $\|\cdot \|_p$ is the vector $\ell_0$ is the pseudonorm that counts the number of non-zero entries. In \cite{Gilbert2017}, they define three variants of the corresponding to three different: $S = \mathbb{R}_{\le 0}$ decrease only metric repair ({\sc Domr}), $S = \mathbb{R}_{\ge 0}$ increase only metric repair ({\sc Iomr}), and $S = \mathbb{R}$, which is simply called metric repair or MR. In general, for any given set $S$ we shall refer to problem as MR($S$)

For the rest of paper we will focus on the increase only case and use the following algorithm from \cite{Gilbert2017}:

\begin{algorithm}
\caption{IOMR Fixed}
\begin{algorithmic}[1]
\Require{$D \in \Sym$ }
\Function{IOMR-Fixed}{D}
\State $\hat{D} = D$
\For{$k \gets 1 \textrm{ to } n$}
\For{$i \gets 1 \textrm{ to } n$}
\State $\hat{D}_{ik} = \max(\hat{D}_{ik}, \max_{j < i}(\hat{D}_{ij}-\hat{D}_{jk}))$
\EndFor
\EndFor
\State \Return{$\hat{D}-D$}
\EndFunction
\end{algorithmic}
\end{algorithm}

While Gilbert and Jain showed that {\sc IOMR-Fixed} worked empirically, they could not provide any guarantees on its performance nor did they demonstrate how it could be used for actual applications. Gilbert and Sonthalia \cite{GilbertSonthalia:GraphMetric2018} provides a more in-depth analysis of a generalized problem. One of their results is to show that the problem of increase only metric repair is NP-Hard for even simple sets $S$. They also provide several approximation algorithms for this problem and more general variants.

As it was noted in the \cite{Gilbert2017}, one could relax Equation~\eqref{eq:sparse_opt} to a convex optimization problem by minimizing $\|P\|_1$ instead of $\|P\|_0$. Gilbert and Jain used various different convex optimization methods, and while these methods produce satisfactory results, they were extremely slow. Hence, we stick with {\sc IOMR-Fixed}. The speed of the optimization algorithms will not scale well with the number of data points as that the output solution $D'=D+P$ needs to be a metric. In particular, for any $i,j,k$ we need the entries for $D'$ to satisfy the triangle inequality. Thus, if we have $n$ data points, we have $O(n^3)$ constraints. Even for simple applications where we have 1000 data points, the optimization problem has $O(10^9)$ constraints. 

\captionsetup[table]{skip=10pt}

\section{Metric repair on manifolds}

In our model scenario, we are given an incomplete data set $X$ and a matrix $Q$ that specifies the support of the \emph{known} entries. We present an algorithm to compute a (potentially corrupted) distance matrix from the incomplete data and then use metric repair to correct the perturbed distances. The idea is to ignore missing entries when calculating the distances between two points as in \cite{Balzano2010}.  Thus, we have the following algorithm. 

\begin{algorithm}
\caption{MR-missing}
\begin{algorithmic}[1]
\Require{$X$ Input data, $Q$ support of the data ($Q_{ij} = 1$ if and only if $X_{ij}$ is present. $Q_{ij} = 0$ otherwise)}
\Function{MR-Missing}{X,Q}
\State $D = zeros(n,n)$
\For{$i \gets 1 \textrm{ to } n$}
\For{$j \gets 1 \textrm{ to } n$}
\State $D_{ij} = \left(\sum_{k=1}^n Q_{ik}\cdot Q_{jk} (X_{ik} - X_{jk})^2\right)^{1/2}$
\EndFor
\EndFor
\State P = IOMR-Fixed(D)
\State {\sc Isomap}$(D+P)$
\EndFunction
\end{algorithmic}
\end{algorithm}

As we will see in Section~\ref{sec:experiments}, this algorithm works well in practice and, in this section, we provide theoretical analysis of its performance in a model setting. Before doing so, let us develop some intuition about the algorithm. To analyze how well the algorithm performs, supposed we had a probabilistic model with the following two assumptions:
\begin{enumerate}
\item if two data points are far apart, then our estimated distance is small with low probability; and,
\item if two data points are far and our estimated distance is small, then, with high probability, we increase this distance during metric repair.
\end{enumerate} 
Assuming the above two assumptions hold, we argue that {\sc MR-Missing} maintains the local structure of the data. When we compute the distance between two data points so as to ignore missing entries, the estimated distance is smaller than the true distance. Thus, if two points are initially close, then they remain close together. This is beneficial since the crucial structure for all manifold learning algorithms is the local distances. It could happen, that because data are missing, two points $x,y$ that were not close together initially have small inferred distance. Then, by assumption (1), this would happen with small probability and, by assumption (2), we would fix this distance with high probability. Finally, since the metric has been repaired and we have, with high probability preserved the local structure of the data set, {\sc Isomap} produces an embedding consistent with that of the full data set. That is, with high probability, the algorithm preserves the local structures and guarantees that all the distances adhere to a metric, and we conclude the low dimensional embeddings calculated with this repaired distance matrix to preserve most of the manifold structure of the data.

\subsection{Theory Result}
There are two steps to our algorithm. The first step is estimating distances between points with missing coordinates. The second is increasing these distances so that the distances adhere to a metric. In this subsection, we are going to analyze the effectiveness of {\sc MR-Missing} by showing that in the following model if two points are well separated then the distance estimated by step 1 is large with high probability. 

In this model our data consists of two Gaussians clusters in $\mathbb{R}^n$ with means $\mu_1,\mu_2 \in \mathbb{R}$ (all coordinates for a data point from one cluster have the same mean) and covariances $\Sigma_1 = \Sigma_2 = 0.5I_n$, where $I_n$ is the $n$ dimensional identity matrix. 

We assume that, for any data point $x$, each of its $n$ coordinates is present independently with probability $p$. Under these conditions, we want to show that with high probability our algorithm preserves local neighborhoods. 

For notational connivence let us define $d_p(x,y)$ to be the distance between $x,y$ estimated by step 1 of {\sc MR-Missing} when each coordinate is present independently with probability $p$. 

Before we can state and prove our result we need a few lemmas first

$ $

\begin{lemma}  [Birg\'e 2001, \cite{birge}] \label{lem:Birge} 
For all $D \ge 1$, if $X = Z_1^2 + \hdots + Z_D^2$, where $Z_i \sim \mathcal{N}(\mu_i,1)$, and $\displaystyle \lambda = \sum_{i=1}^D \mu_i^2$, then, all $0 < c < D+\lambda$, we have that 
\[ 
	\Pr[X \le c] \le e^{-\frac{(D+\lambda-c)^2}{4(D+2\lambda)}}.
\] 
\end{lemma}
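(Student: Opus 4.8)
The plan is to recognize $X$ as a non-central chi-squared random variable with $D$ degrees of freedom and non-centrality parameter $\lambda = \sum_i \mu_i^2$, whose mean is $D+\lambda$ and whose variance is $2(D+2\lambda)$. The target inequality is precisely the sub-Gaussian lower-tail bound $\exp(-(D+\lambda-c)^2/(2\sigma^2))$ with $\sigma^2 = 2(D+2\lambda)$ equal to the \emph{true} variance, and the hypothesis $c < D+\lambda$ places us strictly below the mean. This is exactly the regime in which a Chernoff (Cram\'er) bound on the lower tail is sharp, so I would prove the statement by the exponential-moment method rather than through any distributional identity.

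First I would bound the lower tail by the moment generating function of $-X$. Since the $Z_i$ are independent, the MGF factors, and for each $s > 0$ (every such $s$ is admissible, since the negative exponent means there is no pole at $s = 1/2$) I would record
\[
\mathbb{E}\!\left[e^{-sX}\right] = (1+2s)^{-D/2}\exp\!\left(-\frac{s\lambda}{1+2s}\right).
\]
Writing $\psi(s) = \log \mathbb{E}[e^{-sX}]$, the Chernoff inequality $\Pr[X \le c] \le \exp(sc + \psi(s))$ holds for every $s > 0$, so it suffices to exhibit a single value of $s$ for which $sc + \psi(s) \le -(D+\lambda-c)^2/(4(D+2\lambda))$.

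The crux is a clean quadratic upper bound on $\psi$. A Taylor expansion gives $\psi(s) = -(D+\lambda)s + (D+2\lambda)s^2 + O(s^3)$, so the quadratic whose minimum yields the claimed exponent is already visible at second order. To make this rigorous for all admissible $c$, I would upper bound $\psi$ by this quadratic using two elementary inequalities valid for $u \ge 0$: namely $\log(1+u) \ge u - u^2/2$, applied with $u = 2s$ to the logarithmic term, and $1/(1+2s) \ge 1 - 2s$, applied to the rational term. These combine to give $\psi(s) \le -(D+\lambda)s + (D+2\lambda)s^2$ for every $s \ge 0$, whence $sc + \psi(s) \le -ts + (D+2\lambda)s^2$ with $t := D+\lambda-c > 0$. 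Minimizing this quadratic at $s^\ast = t/(2(D+2\lambda)) > 0$ produces exactly $-t^2/(4(D+2\lambda))$, which is the stated bound after substituting back $t = D+\lambda-c$.

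The main obstacle is not the Chernoff step but getting the constant in the exponent exactly right: the two second-order corrections, $+Ds^2$ from the logarithm and $+2\lambda s^2$ from the rational term, must combine to the coefficient $D+2\lambda$ that matches the true variance, and the two elementary inequalities have to be chosen so that they lose nothing at second order (their remainders, $u^2/(1+u)$ and $4s^2/(1+2s)$ respectively, are manifestly nonnegative but higher order). If either term is bounded too crudely one recovers a weaker denominator such as $D+\lambda$ or $D+4\lambda$, so essentially all of the care lies in matching the quadratic coefficient; the case $\lambda = 0$ is trivial since the rational term then vanishes.
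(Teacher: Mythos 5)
The paper does not prove this lemma at all --- it is imported verbatim from Birg\'e (2001) with a citation and used as a black-box tail bound for the non-central $\chi^2$ distribution, so there is no in-paper proof to compare against. Your Cram\'er--Chernoff argument is correct and self-contained: the MGF identity $\mathbb{E}[e^{-sX}] = (1+2s)^{-D/2}\exp\left(-\frac{s\lambda}{1+2s}\right)$ is right, the two elementary inequalities $\log(1+u)\ge u-u^2/2$ and $(1+2s)^{-1}\ge 1-2s$ (both valid for nonnegative arguments, with nonnegative higher-order remainders as you note) do combine to give $\psi(s)\le -(D+\lambda)s+(D+2\lambda)s^2$ for $s\ge 0$, and minimizing the quadratic at $s^{\ast}=(D+\lambda-c)/(2(D+2\lambda))>0$ yields exactly the claimed exponent $-\frac{(D+\lambda-c)^2}{4(D+2\lambda)}$. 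This is also essentially the standard route by which such bounds are established in the source literature (Birg\'e 2001; cf.\ Laurent--Massart), so your proposal supplies precisely the proof that the paper omits.
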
 

\begin{lemma} [Hoeffding's Inequality] \label{lem:hoeffding} 
If we have $n$ i.i.d. variables $X_1, \hdots, X_n$ such that $X_i = 1$ with probability $p$ and $X_i = 0$ with probability $1-p$, then, for all $\epsilon > 0$, we have that 
\[ 
	\Pr\left[ \left|\sum_{i=1}^n X_i - pn\right| \le \gamma n \right] \ge1-2 e^{-2\gamma^2n}.
\] 
\end{lemma}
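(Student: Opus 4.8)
The plan is to prove the stated (two-sided) concentration bound via the classical Chernoff bounding technique applied to the centered Bernoulli variables. First I would set $Y_i = X_i - p$, so that $E[Y_i] = 0$ and $Y_i \in [-p, 1-p]$; in particular each $Y_i$ is supported on an interval of width exactly $1$. Writing $S = \sum_{i=1}^n X_i - pn = \sum_{i=1}^n Y_i$, I would bound the two tails $\Pr[S \ge \gamma n]$ and $\Pr[S \le -\gamma n]$ separately and then combine them by a union bound, which is exactly where the factor of $2$ in the stated bound comes from; since the complement of the stated event is $\{|S| > \gamma n\}$, bounding $\Pr[|S| \ge \gamma n]$ suffices. (I read the ``for all $\epsilon > 0$'' in the statement as ``for all $\gamma > 0$,'' matching the symbol in the display.)

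For the upper tail, I would fix an arbitrary $s > 0$ and apply Markov's inequality to the nonnegative random variable $e^{sS}$, using independence to factor the moment generating function:
\[
    \Pr[S \ge \gamma n] = \Pr[e^{sS} \ge e^{s\gamma n}] \le e^{-s\gamma n}\, E\!\left[e^{sS}\right] = e^{-s\gamma n}\prod_{i=1}^n E\!\left[e^{sY_i}\right].
\]
The crux of the argument is \emph{Hoeffding's lemma}: for any zero-mean random variable $Y$ taking values in $[a,b]$ one has $E[e^{sY}] \le e^{s^2(b-a)^2/8}$. I would prove this by using convexity of $t \mapsto e^{st}$ to bound it above by the secant line through $(a, e^{sa})$ and $(b, e^{sb})$, taking expectations (the mean-zero hypothesis eliminates the linear term), and then showing that the resulting function $\psi(s)$ satisfies $\psi(0) = \psi'(0) = 0$ and $\psi''(s) \le (b-a)^2/4$, so that a second-order Taylor expansion gives the claimed bound. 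This moment-generating-function estimate is the one genuinely non-routine step, and I expect it to be the main obstacle.

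With $b - a = 1$ for each $Y_i$, Hoeffding's lemma yields $E[e^{sY_i}] \le e^{s^2/8}$, and hence
\[
    \Pr[S \ge \gamma n] \le e^{-s\gamma n + n s^2/8}.
\]
Minimizing the exponent over $s > 0$ gives the optimal choice $s = 4\gamma$, which produces $\Pr[S \ge \gamma n] \le e^{-2\gamma^2 n}$. Applying the identical argument to $-Y_i$ (equivalently, replacing $p$ by $1-p$) gives the matching lower-tail bound $\Pr[S \le -\gamma n] \le e^{-2\gamma^2 n}$, and the union bound then gives $\Pr[|S| \ge \gamma n] \le 2 e^{-2\gamma^2 n}$. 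Taking complements yields $\Pr[|S| \le \gamma n] \ge 1 - 2 e^{-2\gamma^2 n}$, which is precisely the stated inequality.
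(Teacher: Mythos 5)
Your proof is correct: the reduction to centered variables, Chernoff bounding via Hoeffding's lemma, the optimal choice $s = 4\gamma$ giving the exponent $-2\gamma^2 n$, the union bound over the two tails, and passage to the complement all go through exactly as you describe (and you rightly read the statement's ``$\epsilon > 0$'' as ``$\gamma > 0$''). Note that the paper offers no proof at all for this lemma---it is quoted as the classical Hoeffding inequality, just as Lemma~\ref{lem:Birge} is quoted from Birg\'e---so your argument is simply the standard textbook derivation of the cited result and correctly fills in what the paper omits.
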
 
Lemma~\ref{lem:Birge} allows us to bound the tail of a non-central $\chi$-squared distribution and Lemma~\ref{lem:hoeffding} allows us to bound the probability that we have too much data missing. Combining these two Lemmas, we have the following theorem.
\begin{thm} 
Suppose $X \sim \mathcal{N}(\mu_1 \boldsymbol{1}, 0.5I)$ and $Y \sim \mathcal{N}(\mu_2 \boldsymbol{1}, 0.5I)$ are two points in $\mathbb{R}^n$ such that each coordinate of $X,Y$ is missing with probability $p$. If $\mu = \mu_1 - \mu_2$ and $q = p^2$, then, for all $q(1+\mu^2) > \epsilon > 0$ and  $ \frac{q(1+\mu^2) - \epsilon}{(1+\mu^2} > \gamma > 0$, we have that 
\[ 
	\Pr[d_p(x,y) < \epsilon n] \le e^{-2\gamma^2n} + 
			\left(e^{-\frac{((q-\gamma)(1+\mu^2)-\epsilon)^2}{4(q-\gamma)(1+2\mu^2)} } \right)^{n}. 
\] 
\end{thm}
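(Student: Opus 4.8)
The plan is to rewrite the event as a tail bound for a conditionally non-central $\chi^2$ sum whose (random) number of degrees of freedom is then controlled by Hoeffding's inequality. First I would work with the quantity $W := \sum_{k=1}^n B_k Z_k^2$, i.e.\ the sum of squared coordinate differences over the jointly present coordinates computed in {\sc MR-Missing}, where $Z_k := X_k - Y_k$ and $B_k := Q_{Xk}Q_{Yk}$ indicates that coordinate $k$ is present in \emph{both} points. Since the per-coordinate means are $\mu_1,\mu_2$ and each variance is $0.5$, we have $Z_k \sim \mathcal{N}(\mu,1)$ i.i.d.\ with $\mu = \mu_1-\mu_2$; since presence is independent across points and coordinates, the $B_k$ are i.i.d.\ Bernoulli with success probability $p^2 = q$ and, crucially, independent of $Z_k$. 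So the event in question is exactly $\{W < \epsilon n\}$.

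Next I would condition on $m := \sum_{k} B_k$, the number of jointly present coordinates. Given $m$, the surviving summands are $m$ i.i.d.\ copies of $Z_k^2$, so $W$ is non-central $\chi^2$ with $D = m$ degrees of freedom and non-centrality $\lambda = m\mu^2$. Provided $\epsilon n < m(1+\mu^2)$, so that the hypothesis $0 < c < D+\lambda$ of Lemma~\ref{lem:Birge} holds with $c = \epsilon n$, the lemma gives
\[
	\Pr[W \le \epsilon n \mid m] \le \exp\!\left(-\frac{(m(1+\mu^2) - \epsilon n)^2}{4m(1+2\mu^2)}\right).
\]

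I would then split on the value of $m$. On the event $\{m < (q-\gamma)n\}$ I simply bound the probability by $\Pr[m < (q-\gamma)n]$, which is at most $e^{-2\gamma^2 n}$ by the one-sided form of Hoeffding's inequality (Lemma~\ref{lem:hoeffding}); this produces the first term. On the event $\{m \ge (q-\gamma)n\}$ I need the Birgé bound to be worst at the left endpoint. Writing $a = 1+\mu^2$, $b = \epsilon n$, $c = 1+2\mu^2$ and $f(m) = (ma-b)^2/(4mc)$, a one-line derivative computation gives $f'(m) = (a^2 - b^2/m^2)/(4c)$, which is positive exactly when $m > b/a = \epsilon n/(1+\mu^2)$. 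The theorem's constraint $\gamma < (q(1+\mu^2)-\epsilon)/(1+\mu^2)$ is precisely what forces $(q-\gamma)n > \epsilon n/(1+\mu^2)$, so $f$ is increasing throughout $[(q-\gamma)n, n]$; hence $\exp(-f(m))$ is maximized at $m = (q-\gamma)n$, and substituting this value and canceling a factor of $n$ yields the stated second term $\big(\exp(-((q-\gamma)(1+\mu^2)-\epsilon)^2/(4(q-\gamma)(1+2\mu^2)))\big)^{n}$. Summing the two contributions completes the proof.

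The main obstacle I anticipate is the monotonicity step: the non-central $\chi^2$ tail bound depends on the degrees of freedom $m$ in a nonobvious way, and one must verify that $f$ is increasing across the \emph{entire} range of admissible $m$, not merely at a point, before taking the worst case. This is exactly where the hypothesis on $\gamma$ is used, and the same constraint is what keeps the Birgé condition $c < D + \lambda$ valid uniformly on $\{m \ge (q-\gamma)n\}$ so that the lemma can be applied before the conditioning is removed.
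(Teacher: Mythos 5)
Your proof is correct, and it follows the same skeleton as the paper's: pass to the i.i.d.\ differences $Z_k \sim \mathcal{N}(\mu,1)$, each retained independently with probability $q = p^2$; split according to whether the number of retained coordinates falls below $(q-\gamma)n$; kill the first event with one-sided Hoeffding; and control the second with Birg\'e's bound, the constraint on $\gamma$ ensuring the hypothesis $c < D + \lambda$ of Lemma~\ref{lem:Birge} is met. The one place you genuinely diverge is the reduction to the worst case $m = (q-\gamma)n$. You apply Birg\'e's bound conditionally at every admissible $m$ and then prove, by differentiating $f(m) = (m(1+\mu^2)-\epsilon n)^2/(4m(1+2\mu^2))$, that the bound is worst at the left endpoint of $[(q-\gamma)n,\, n]$. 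The paper avoids this calculus entirely: it observes that the exact conditional probability $p_k = \Pr[\sum_{j=1}^{k} Z_{i_j}^2 \le \epsilon n]$ is monotone decreasing in $k$, since adding one more non-negative summand $Z_{i_{k+1}}^2$ can only increase the sum and shrink the event, hence $\sum_{k\ge D}\Pr[K=k]\,p_k \le p_D$, and Birg\'e is invoked only once, at $k = D = (q-\gamma)n$. The stochastic-monotonicity route is shorter and needs the Birg\'e hypothesis only at that single point; your route costs a derivative computation and a uniform check of the hypothesis over the whole range, but it correctly identifies (and resolves) the issue that the Birg\'e \emph{bound} itself is not obviously monotone in the degrees of freedom --- precisely where the hypothesis on $\gamma$ enters. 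Both arguments yield exactly the stated two-term bound.
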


\begin{proof} Let $Z = X-Y$. Then $Z_1, \hdots, Z_n$ are i.i.d Gaussian random variables with mean $\mu = \mu_1 - \mu_2$ and variance 1. We know from {\sc MR-Missing} that we use the entry $Z_i$ to calculate the distance between $X,Y$ if and only if both $X_i$ and $Y_i$ are present. This happens with probability $p^2$, which we define as $q = p^2$. Thus, we have the entry $Z_i$ with probability $q$. 

Let $q(1+\mu^2) > \epsilon > 0$. We want to show that the probability that the distance calculated by step 1 of {\sc MR-Missing} is greater than $\epsilon$ is small. To do this, we use Hoeffding's inequality to divide into two cases, one in which we observe a large number of coordinates and one a few coordinates. We shall see that the case when we observe a few coordinates occurs with low probability and we obtain a bound on the distance. We will then see that for the case where we see a large number of coordinates, then the distance is large with high probability. 

Let $\displaystyle 0 < \gamma < \frac{q(1+\mu^2) - \epsilon}{(1+\mu^2)}$. Then, by definition of $\epsilon$, we see that $q > \gamma > 0$. Let $K$ be the number of entries we observe and, by Lemma \ref{lem:hoeffding}, we have that 
\[ 
	\Pr[ K \le (q-\gamma)n] \le e^{-2\gamma^2n}.
\] 
For notational convenience, let $D = (q-\gamma)n$ and consider two cases.

\textbf{Case 1:} Suppose $K \le D$. That is, we observe fewer than $D$ entries. Then, by Hoeffding's Inequality, we know that this happens with probability at most $ e^{-2\gamma^2n}$. 

Thus we have that \[ \Pr[d_p(x,y) < \epsilon n | K \le D] \Pr[K \le D] \le e^{-2\gamma^2n} \]

\textbf{Case 2:} Suppose $K \ge D$. That is, we observe a large number of entries. Let us condition on the actual value of $K$. Suppose that $K=k$ and we have observed entries $Z_{i_1}, \hdots, Z_{i_k}$. We know these entries are i.i.d. with mean $\mu$ and we want to bound 
\[ 
	p_k := \Pr\left[ \sum_{j=1}^k Z_{i_j}^2 \le \epsilon n  \right].
\]

In this case we see that the overall probability that we have a small distance, given that $K \ge D$, is  
\[ 
	\sum_{k= D}^n \Pr[K=k] p_k. 
\] 
The next thing to observe is that $p_k$ is monotone decreasing in $k$ because each $Z_i$ is non-negative with non-zero mean. Thus, we have  the following upper bound
\begin{align*}  
	\Pr[d_p(x,y) < \epsilon n | K \ge D]  &\le \sum_{k= D}^n \Pr[K=k] p_k \\
										  &\le  \sum_{k= D}^n \Pr[K=k] p_{D} \\
										  &\le p_{D}. 
\end{align*}

Now, we can use our tail bound for the $\chi$-squared distribution with $\displaystyle c = \epsilon n$, $D =(q-\gamma)n$, and $\lambda = D\mu^2$.  Thus,  we have 
\begin{align*} 
	p_{D} &\le e^{-\frac{((q-\gamma)n(1+\mu^2)-\epsilon n)^2}{4(q-\gamma)n(1+2\mu^2)} }\\
	      &= \left(e^{-\frac{((q-\gamma)(1+\mu^2)-\epsilon)^2}{4(q-\gamma)(1+2\mu^2)} } \right)^{n}.
\end{align*} 
 
Combining both cases, we see that the probability that the distance calculated is less than $\epsilon n$ is at most 
\[ 
	\Pr[d_p(x,y) < \epsilon n] \le e^{-2\gamma^2n} + \left(e^{-\frac{((q-\gamma)(1+\mu^2)-\epsilon)^2}{4(q-\gamma)(1+2\mu^2)} } \right)^{n}.
\]  
\end{proof}

Let us take a closer look at the effect of the various parameters on the above probability: 
\begin{itemize}
\item \emph{The dimension $n$ of the data}: As $n$ increases, we have an exponential decrease in the probability that two points from the two Gaussian clouds have distance smaller than $\epsilon n$. Thus, we expect our algorithm to work better for high dimensional data.
\item \emph{The mean squared distance between the clusters $\mu^2 n$}: As $\mu^2$ increases (i.e., as the data are better separated), the probability that they have small distance in the presence of missing data gets smaller.  This also allows for a wider range of $\epsilon$ and $\gamma$. 
\item \emph{The probability of a coordinate being present $p$}: First, we note that as $p$ increases, $q$ increases. Then, as $q$ increases (i.e., we have more data present), the probability that two points from the two Gaussian clouds have distance smaller than $\epsilon n$ decreases.  Additionally, for all $q > 0$, we have a feasible range for $\epsilon$. Thus, for any percentage of missing data, if we have enough data points, we can use {\sc MR-missing} for dimensionality reduction.
\end{itemize}

The final parameter $\gamma$ has a range of values it can take on and we could optimize over it to get the smallest possible bound.

Finally, as noted before, our method of estimating distances only decreases distances. Thus, points that were close together stay close together. Now by the above the theorem we see that if points were initially far apart, then our method of estimating distances keeps them far apart with high probability. 

Then, since our method of repairing the metric only increases the distance, we see that we maintain the local neighborhood structure with high probability, in this data model.

\section{Experiments}
\label{sec:experiments}

Let us now verify that our algorithm does will in practice through a variety of experiments. 
Dimensionality reduction and clustering algorithms are normally used when we have unlabeled date (i.e., in the regime of unsupervised learning). In this case figuring out the ground truth can be difficult. Hence, there are no natural numerical metrics on unlabeled data that we can readily use to evaluate our algorithm. Hence, decided to test the effectiveness of our algorithm on both unlabeled and labeled data in a variety of different experiments. 

\subsection{Unlabeled Data}

For unlabeled data, we tested the performance of {\sc MR-Missing} on synthetic manifolds visually as well as compared our algorithms against nlPCA and mDRUR numerically. 

\subsubsection{Synthetic Manifolds}

Let us first define the six manifolds we tested our algorithm on. For notational convenience, let $U(n,m)$ be an $n \times m$ matrix with entries drawn uniformly at random from $[0,1]$ and let $N(n,m)$ be an $n \times m$ matrix with entries are drawn from a standard Gaussian distribution. Finally, we shall write $f.(X)$ to represent applying $f$ coordinate wise to all elements of $X$. We generated the six synthetic manifolds as follows:

\begin{enumerate}
\item $M_1 = \cos.(U(2000,2)\cdot N(2,30))$
\item $M_2 = \cos.(\text{sigmoid}.(U(2000,2)\cdot N(2,30)) \cdot N(30,300))$
\item $M_3$ is a three dimensional manifold where $x,y$ are drawn from a standard normal and $z = e^{-\sqrt{x^2+y^2}}$
\item $M_4$ is a three dimensional manifold where $x,y$ are drawn from a uniform on $[0,1]^2$ and $z = 20e^{(-(x^2+y^2))}$
\item $M_5 = \cos.(M_{\text{Swiss Roll}})$. Where $M_{\text{Swiss Roll}}$ is the manifold used in the introduction.
\item $M_6$ is closed helical curve in three dimensions. Where $u$ is uniformly drawn from $[0,4\pi]$, and $v = 0.5u$. Then $x = (3 + \cos(u))\cos(v)$, $y = (3 + \cos(u))\sin(v)$, and $z = \sin(u)$
\end{enumerate}

In each case we start with 2000 data points on the high dimensional representations of the six manifolds $M_1, \hdots, M_6$. We then compute low dimensional representations using the full data set and using data set with missing entries as follows. We ran {\sc Isomap} with the true distance matrices to get the low dimensional projections, as depicted on the left hand side of Figures 3 and 4. We then picked 40\% on the entries uniformly at random, declared these entries to be missing, and used {\sc MR-Missing} to get the projections depicted on right of Figures 3 and 4. We used the same {\sc Isomap} parameters across both algorithms. We then compared how there projections looked visually. In most cases our algorithm did well in preserving the general structure of the low dimensional projection as can be seen in Figures 3 and 4.

\begin{figure}[h!]
\centering

\subfloat[$M_1$ with full data]{\includegraphics[width = 0.49\linewidth]{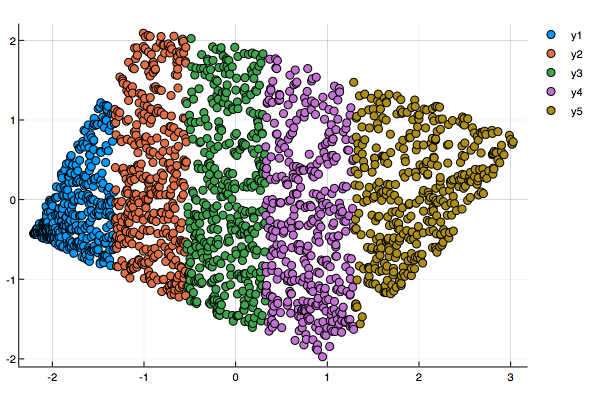}}
\subfloat[$M_1$ with missing data]{\includegraphics[width = 0.49\linewidth]{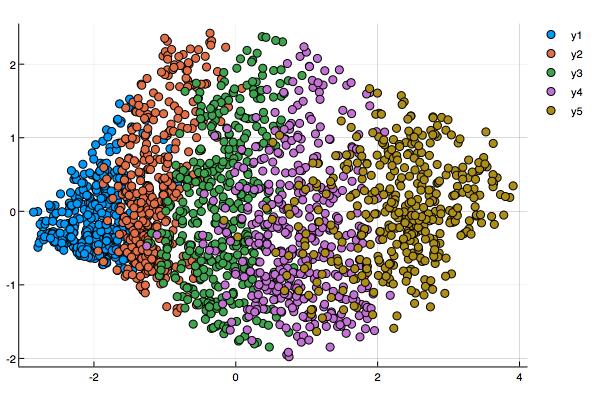}}

\subfloat[$M_2$ with full data]{\includegraphics[width = 0.49\linewidth]{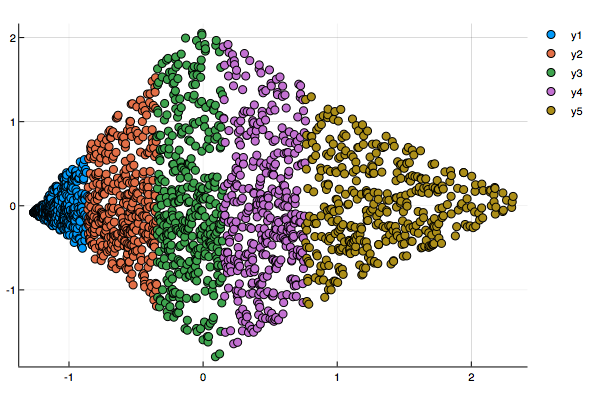}}
\subfloat[$M_2$ with missing data]{\includegraphics[width = 0.49\linewidth]{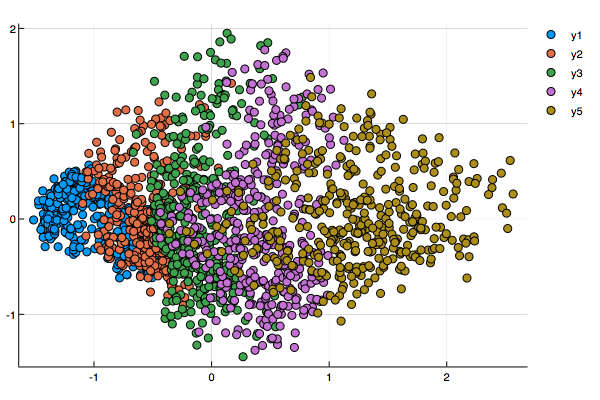}}

\subfloat[$M_3$ with full data]{\includegraphics[width = 0.49\linewidth]{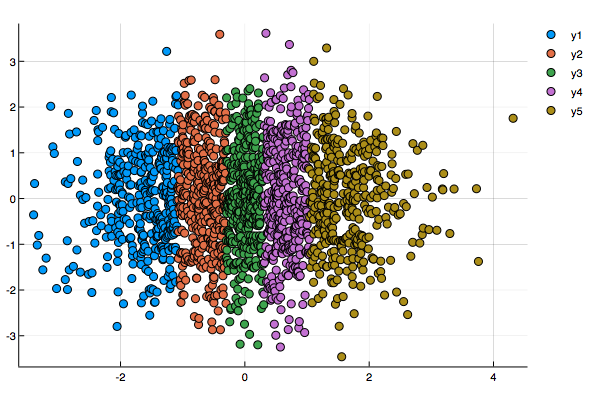}}
\subfloat[$M_3$ with missing data]{\includegraphics[width = 0.49\linewidth]{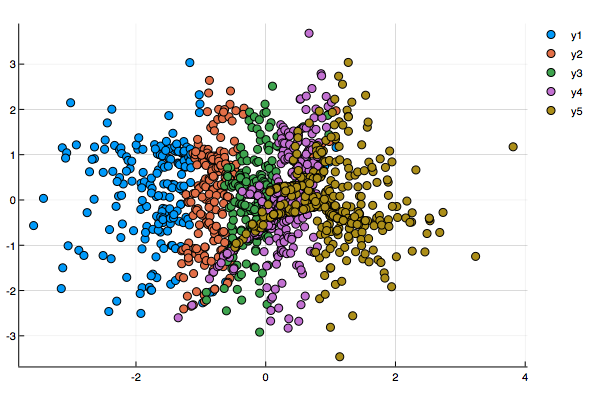}}

\caption{The two-dimensional embeddings produced by {\sc Isomap} with complete data (left) versus the two-dimensional embedding produced by {\sc Isomap} where 40\% of the data is missing and we use {\sc MR-Missing} to correct the distance matrix for the manifolds $M_1,M_2,M_3$ (right).}
\end{figure}

\begin{figure}[h!]
\centering

\subfloat[$M_4$ with full data]{\includegraphics[width = 0.49\linewidth]{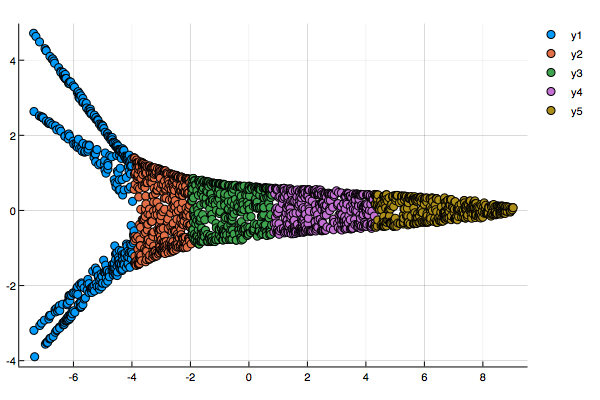}}
\subfloat[$M_4$ with missing data]{\includegraphics[width = 0.49\linewidth]{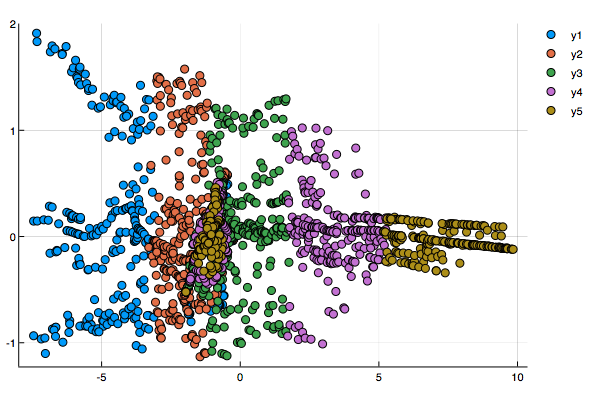}}

\subfloat[$M_5$ with full Data]{\includegraphics[width = 0.49\linewidth]{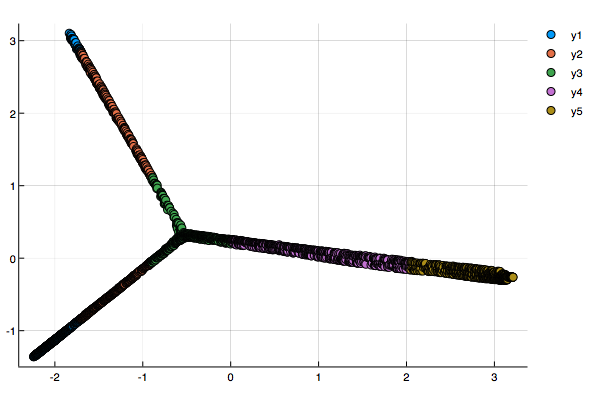}}
\subfloat[$M_5$ with missing Data]{\includegraphics[width = 0.49\linewidth]{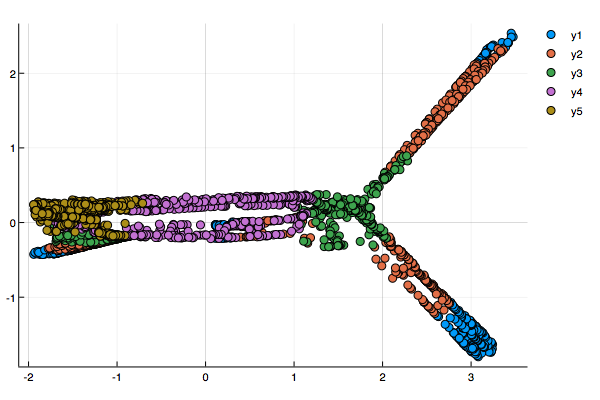}}

\subfloat[$M_6$ with full data]{\includegraphics[width = 0.49\linewidth]{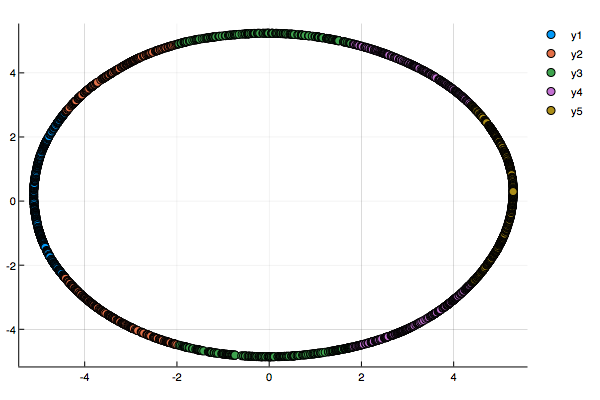}}
\subfloat[$M_6$ with missing data]{\includegraphics[width = 0.49\linewidth]{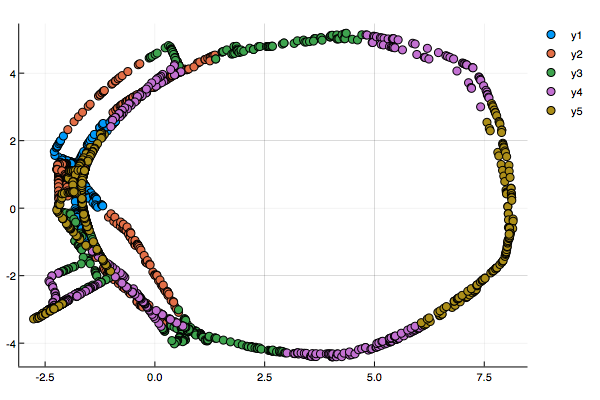}}

\caption{The two-dimensional embeddings produced by {\sc Isomap} with complete data (left) versus the two-dimensional embedding produced by {\sc Isomap} where 40\% of the data is missing and we use {\sc MR-Missing} to correct the distance matrix for the manifolds $M_4,M_5,M_6$ (right).}
\end{figure}

In each case, we can see that {\sc MR-Missing} does well at preserving not only the general shape of the low dimensional, but also in maintaining the relative ordering of the data points with some minor distortion. Therefore, it is also useful for applications that use these projections for clustering and classification. We test this in the next.

\subsubsection{MR-missing vs nlPCA vs mDRUR}

We compare {\sc MR-Missing} against Non Linear PCA (nlPCA) and Missing Data Recovery Through Unsupervised Regression (mDRUR) on the MNIST data set.  nlPCA and mDRUR are both methods to complete missing data on a manifold, whereas our algorithm is a method to estimate and correct distances so that we can use dimensionality reduction algorithms on data sets with missing entries. So, we must compare all of these algorithms on low dimensional representations. To that end, for the nlPCA and mDRUR algorithm, we first filled in missing data in the MNIST data set and then used {\sc Isomap} to determine the low dimensional projection. To quantify how well these representations do we compared these representations to the low dimensional representation computed by using {\sc Isomap} with the complete data set. We kept the parameters for {\sc Isomap} fixed across all four projections. 

We compared these projections in the following manner. Let $P$ be the ground truth projection and $\hat{P}$ the projection computed by any algorithm on the data set with missing entries. We first aligned the two projections, using the Procrustes method because the representation returned by Multidimensional Scaling is unique only up to rotation and translation. Additionally, since we care only about the relative positions of the points and not the magnitude of the distances between them, we also allowed for scaling. That is, to ascertain the quality of the projection $\hat P$, we want to find 
\[ 
	{\rm argmin}_{Q,\alpha, \mu} \|X - \alpha YQ - 1_m \mu^t \|_F 
\] 
where $X$ is the data matrix that we are trying to align $Y$ with. Here, each row is a data point,  $\alpha$ is the scaling constant, $Q$ is our orthogonal rotation matrix and $\mu$ is our translational vector and $1_m$ is the vector of all ones in $m$ dimensions. 

This problem has a closed form solution in terms of the SVD decomposition of $X$. Finally, if we let $\tilde{P}$ be our new scaled, rotated, and translated projection that best aligns with the ground truth projection $P$, then we calculate the relative error between the projections as 
\[ 
	\frac{\|P-\tilde{P}\|_F}{\|P\|_F}. 
\]

The data set we used is the first 1000 images of the digit 0,1,2,3,4 from MNIST. For nlPCA we used a $784 \times 800 \times 12 \times 800 \times 784$ structure. For mDRUR we initially filled in the matrix using the singular value projection method. We then calculated an initial 12 dimensional low representation using {\sc Laplacian Eigenmaps}.  We can see the relative errors in Table 1. 

\begin{table*}[h!]
  \centering
 \begin{tabular}{|c|c||c|c|c|c|c|c|c|c|} \hline
 Algorithm & \% Missing & 2D & 3D & 4D & 10D & 12D & 20D & 50D & 100D\\ \hline \hline
 nlPCA & 40 &  0.363 & 0.350 & 0.385 & 0.404 & 0.451 & 0.514 & 0.623 & 0.686 \\ \hline 
 mDRUR & 40 & 0.369 & 0.363 & 0.359 & 0.420 & 0.427 & 0.505 & 0.630 & 0.717 \\ \hline
 \textbf{MR-Missing} & 40 & \textbf{0.291} & \textbf{0.274} & \textbf{0.263} & \textbf{0.339} & \textbf{0.359} & \textbf{0.438} & \textbf{0.572} & \textbf{0.658} \\ \hline \hline
 nlPCA  & 50 & \textbf{0.324}  & 0.330 & \textbf{0.317} & 0.394  &  0.441 &  0.506 & 0.621  & \textbf{0.685}  \\ \hline
 mDRUR & 50 & 0.497 & 0.505 & 0.471 & 0.518 & 0.542 & 0.587 & 0.707 & 0.777 \\ \hline
 \textbf{MR-Missing} & 50 & \textbf{0.323}  & \textbf{0.317}  &  0.328 & \textbf{0.393} & \textbf{0.417} & \textbf{0.482}  & \textbf{0.615}  &  0.707 \\ \hline \hline
  \textbf{nlPCA}  & 60 &  \textbf{0.366} &  \textbf{0.365} & 0.399 & \textbf{0.405} & \textbf{0.441} &  0.520 & \textbf{0.635} &  \textbf{0.696}\\ \hline
  mDRUR & 60 & 0.595 & 0.595 & 0.573 & 0.654 & 0.667 & 0.712 & 0.802 & 0.849 \\ \hline
  \textbf{MR-Missing} & 60 & \textbf{0.369} & \textbf{0.370}  & \textbf{0.376} & 0.436 & \textbf{0.448}  &  \textbf{0.505} &  0.653 & 0.741 \\ \hline \hline
 \textbf{nlPCA}  & 70 & \textbf{0.373} & \textbf{0.373} & \textbf{0.391}& \textbf{0.432} & \textbf{0.465} & \textbf{0.533} & \textbf{0.643} & \textbf{0.706} \\ \hline
  mDRUR & 70 & 0.924 & 0.874 & 0.820 & 0.825 & 0.830 & 0.854 & 0.898 & 0.920 \\ \hline
  MR-Missing & 70 & 0.484 & 0.491 & 0.595 & 0.498 & 0.510 & 0.573 & 0.697  & 0.784 \\ \hline 
  \end{tabular} 
  \caption{Table comparing the relative error of the projection of MNIST data obtained via NLPCA vs mDRUR vs  MR-missing for various different dimensions and percentage of data missing}
\end{table*}

We can see that in all cases mDRUR does the worst so we will focus on comparing {\sc MR-Missing} versus nlPCA. When we have 40\% missing data {\sc MR-Missing} does better than the nlPCA version in all cases. For 50\% missing we see that nlPCA does better or as well in some cases with {\sc MR-Missing} still doing better in a majority of the cases. For 60\% missing the algorithms have similar results and nlPCA does better in the case of 70\% missing data. Thus, {\sc MR-Missing} does better for smaller percentage of missing data while for higher percentage of missing data nlPCA does better. Additionally we can see that as the dimension increases both methods have worse errors. We posit that this occurs because to compute a rank $k$ projection $P (\hat{P}, \tilde{P})$ we are using the first $k$ singular values of the distance matrix computed by {\sc Isomap} (see section 1B). Hence our estimation the distance matrix does better at preserving the larger singular values as compared to the smaller values. 

Let us also take a closer look at what our algorithm does in the case of 70\% missing data. See the two dimensional representations shown in Figure 5.
\begin{figure}[!h]
\centering

\subfloat[ISOMAP with actual distances]{\includegraphics[width = 0.49\linewidth]{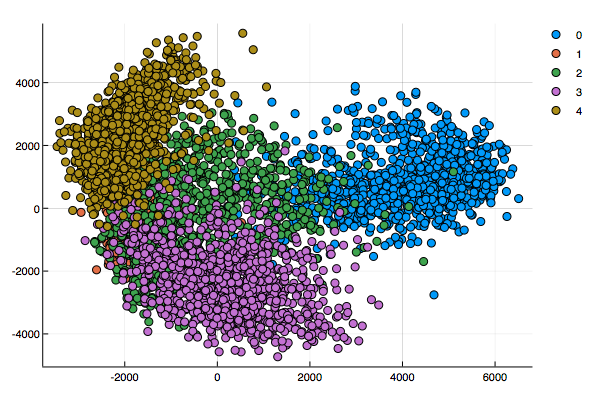}}
\subfloat[ISOMAP with repaired distances]{\includegraphics[width = 0.49\linewidth]{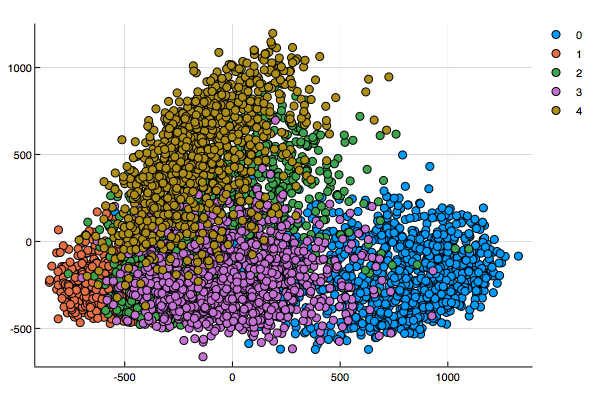}}

\caption{Two-dimensional projections of the first 1000 images of the digits 0,1,2,3,4 from MNIST using {\sc Isomap} with true distance and {\sc Isomap} with distance obtained from {\sc MR-Missing} when 70\% of the data is missing.}
\end{figure}

We see that we have a different looking projection, but the projection still does well in maintaining the clustering of the data, as we predicted theoretically in Section 3.  In the next subsection, we test the effectiveness of these computed low dimensional representations for classification.

\subsection{Labeled Data}

One of the main reasons to find a lower dimensional data representation is to efficiently carry out standard machine learning tasks, such as classification. In this subsection, we test the usefulness of the low dimensional representations produced by {\sc MR-Missing} for classification. We calculate a 100-dimensional representation of 500 images of each digit from MNIST. This is our training set. We then used the equation from \cite{Bengio2003} to calculate the projections for an additional 100 images of each digit. This is our test set. Then, for classification we trained an SVM (Kernel: RBF, $C = 200$, $\gamma = 0.0000002$) for classification. We then obtained the following classification accuracy for various amounts of missing data shown in Table 2. We also ran this classifier with no missing data as a benchmark. As we expect, with missing data, we do not have as high accuracy as we do with complete data, but from our experiments we see that even in the presence of missing data, {\sc Isomap} with {\sc MR-Missing} produces embeddings on which we still have reasonably high accuracy. Particularly in the cases of 40\% and 50\% missing data, we have accuracy of over 90\% and the accuracy does not drop off drastically until we get to 80\% missing data. 

\begin{table}[h]
\centering
\begin{tabular}{|c|c|c|c|c|c|c|c|} \hline
\% missing & 0 & 40 & 50 & 60 & 70 & 80 & 90 \\ \hline
Accuracy & 0.94 & 0.91  & 0.90 & 0.86 & 0.77 & 0.20 & 0.10 \\ \hline
\end{tabular}
\caption{Table showing the accuracy of an SVM trained on the low dimensional projections produced by MR}
\end{table}

It is important to note that both the test and the training set had data points missing. We see that {\sc MR-Missing} does a good job of maintaining the original clusters of the data. 
\section{Conclusion and Future Work}

As we can see {\sc MR-Missing} has excellent experimental results. That is, {\sc MR-Missing} is a method by which we can use traditional dimensionality reduction algorithms in the presence of missing data. While we have some theoretical justification for this observed performance, more work needs to be done in exploring the effectiveness of our method of estimating distances in more general scenarios. 

Additionally, we did not consider in a detailed fashion other corrupted data or distance models. One approach to metric completion (model scenario 4) is to take the given distances, treat these as edges on a graph, and run APSP on this graph to fill in the missing distances. It is possible that APSP modifies some of the given distance information while also filling in the missing values. Thus, a natural question is are there conditions on the given data that guarantee that an APSP algorithm will not change the given data while simultaneously repairing those that are missing? Gilbert and Sonthalia~\cite{GilbertSonthalia:GraphMetric2018} provide some analysis which suggests a more general version of metric repair may be applied to the metric problem.

\begin{thm}(Gilbert and Sonthalia~\cite{GilbertSonthalia:GraphMetric2018}) 
Suppose $G$ is a weighted chordal graph such that no three cycle is broken. Then if we run APSP on this graph, the shortest path between any two adjacent vertices, is the edge connecting them. 
\end{thm}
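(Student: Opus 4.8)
The plan is to reduce the statement to a single claim about path lengths: for every edge $(u,v)$ of $G$ with weight $w_{uv}$, every $u$--$v$ path has total weight at least $w_{uv}$. Since the edge itself is a $u$--$v$ path of weight exactly $w_{uv}$, this claim shows that the edge realizes the shortest-path distance, so running APSP cannot shorten it and therefore returns the edge as a shortest path. Because all weights are positive, I may restrict to simple paths, and writing $\ell(P)$ for the total weight of a path $P$, I would prove the claim by strong induction on the number of edges $k$ of a simple path $P = x_0 x_1 \cdots x_k$ with $x_0 = u$ and $x_k = v$.

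For the base cases, $k = 1$ is immediate since then $\ell(P) = w_{uv}$, and $k = 2$ is where the metric hypothesis enters: the vertices $u, x_1, v$ form a triangle because $(u,v) \in E$, and ``no three-cycle is broken'' means exactly that every triangle satisfies the triangle inequality, so $w_{uv} \le w_{u x_1} + w_{x_1 v} = \ell(P)$.

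For the inductive step with $k \ge 3$, I would close the path into the simple cycle $C = x_0 x_1 \cdots x_k x_0$ using the edge $(u,v)$; this cycle has length $k + 1 \ge 4$, so chordality supplies a chord $(x_i, x_j)$ with $i < j$, non-adjacent on $C$ and different from $(x_0, x_k)$, whence $2 \le j - i \le k - 1$. The sub-path $x_i \cdots x_j$ joins the adjacent pair $(x_i, x_j)$ and has fewer than $k$ edges, so the inductive hypothesis gives $\ell(x_i \cdots x_j) \ge w_{x_i x_j}$; replacing this sub-path by the chord produces a simple $u$--$v$ path $P''$ with $\ell(P'') \le \ell(P)$ and with $k - (j - i) + 1 < k$ edges. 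Applying the inductive hypothesis a second time, now to $P''$, yields $\ell(P'') \ge w_{uv}$, and therefore $\ell(P) \ge \ell(P'') \ge w_{uv}$, which closes the induction.

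The heart of the argument, and the step I expect to require the most care, is this inductive step and in particular its two-fold appeal to the hypothesis: once locally, to certify that substituting the chord for the sub-path it spans does not increase the length, and once globally, on the contracted path $P''$. The bookkeeping to watch is that chordality guarantees a usable chord for every cycle of length at least four, that excluding consecutive pairs and the pair $(x_0, x_k)$ forces $j - i \ge 2$ so that $P''$ is strictly shorter in edge count, and that $P''$ inherits simplicity from $P$ so the hypothesis genuinely applies. The metric assumption is used only at $k = 2$, so the one conceptual move that makes everything cohere is recognizing ``no broken three-cycle'' as the triangle inequality on every triangle, which serves as the induction's base.
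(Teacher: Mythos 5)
Your proof is correct, but note that this paper never proves the statement: the theorem appears in the conclusion as a quoted result of the companion paper~\cite{GilbertSonthalia:GraphMetric2018}, so there is no in-paper argument to compare against, and your write-up has to stand on its own. It does. The reduction to ``every simple $u$--$v$ path has weight at least $w_{uv}$'' is the right formulation; the base case $k=2$ is exactly where ``no broken three-cycle'' (triangle inequality on every triangle of $G$) is used; and in the inductive step the closed walk $x_0x_1\cdots x_kx_0$ really is a simple cycle of $G$ of length $k+1\ge 4$ (because $(u,v)\in E$ and $P$ is simple), so chordality supplies a chord $(x_i,x_j)$ with $2\le j-i\le k-1$. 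Your two appeals to the strong induction hypothesis are both legitimate: the sub-path $x_i\cdots x_j$ has $j-i<k$ edges and $G$-adjacent endpoints, and the contracted path $P''$ is simple with $k-(j-i)+1\le k-1$ edges. This is, in substance, the standard chordal-graph argument, which is usually phrased as a cycle-splitting induction---a chord splits any cycle of length at least four into two shorter cycles, each handled by induction, giving the statement that no cycle of $G$ is broken once no triangle is; your path-side formulation with two IH applications is the same decomposition viewed from the other end, and it buys a statement that plugs directly into the APSP conclusion. One cosmetic remark: restricting attention to simple paths requires only nonnegativity of the weights, not strict positivity, so that hypothesis can be stated more weakly; since the weights are distances, this is immaterial.
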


\begin{cor} (Gilbert and Sonthalia~\cite{GilbertSonthalia:GraphMetric2018})
If the given distances form a graph $G$, where $G$ is a weighted chordal graph such that no 3 cycle is broken, then this partial distance information can be completed into a metric. 
\end{cor}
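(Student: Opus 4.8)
The plan is to exhibit an explicit metric on the full vertex set that agrees with the given partial distances, and the obvious candidate is the shortest-path distance produced by APSP. Writing $V$ for the vertex set of $G$ and $w(u,v)$ for the given weight on an edge $(u,v) \in E(G)$, I would define $d(u,v)$ to be the length of the shortest path between $u$ and $v$ in $G$, which is exactly what APSP computes. The proof then splits into two independent pieces: first, that $d$ satisfies the metric axioms, and second, that $d$ leaves every given distance unchanged, i.e. $d(u,v) = w(u,v)$ for each edge $(u,v) \in E(G)$.

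For the first piece I would invoke the standard fact that shortest-path distances form a metric whenever the edge weights are nonnegative and the graph is connected. Since the weights here are distances they are positive, so there are no negative cycles and every shortest path is well defined and finite (assuming, as is implicit in the statement, that $G$ is connected; otherwise one argues component by component). Nonnegativity, symmetry (as $G$ is undirected), and $d(u,u) = 0$ are immediate, while the triangle inequality $d(u,v) \le d(u,x) + d(x,v)$ holds by optimality of shortest paths, since concatenating a shortest $u$--$x$ path with a shortest $x$--$v$ path is one admissible $u$--$v$ walk. This part is entirely routine.

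The second piece is where the structural hypotheses do the work. The completion $d$ preserves the input on an edge $(u,v)$ precisely when the shortest path between these two \emph{adjacent} vertices is the edge itself rather than a shorter detour through the rest of the graph. This is exactly the conclusion of the preceding Theorem of Gilbert and Sonthalia: for a weighted chordal graph in which no three-cycle is broken, APSP leaves every edge weight unchanged. Applying that theorem to our $G$ gives $d(u,v) = w(u,v)$ for all $(u,v) \in E(G)$, so $d$ is a metric on $V$ that restricts to the given distances on the specified pairs, which is precisely what it means to complete the partial distance information into a metric.

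The main obstacle is essentially outsourced to that Theorem, so the corollary reduces to recognizing that the Theorem's conclusion is exactly the compatibility condition a metric completion requires. The one point that deserves care is why a given edge $(u,v)$ can never be undercut: any such detour together with the edge closes a cycle, and because $G$ is chordal this cycle can be triangulated using chords already present in $G$; the ``no broken three-cycle'' hypothesis then forces each of these triangles to satisfy the triangle inequality, so no composition of them can yield a path shorter than the direct edge. Turning this inductive triangulation argument into a rigorous statement is the content of the Theorem we are permitted to assume, after which the corollary follows immediately.
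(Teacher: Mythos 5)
Your proposal is correct and matches the argument the paper intends: run APSP on $G$, note that shortest-path distances with nonnegative weights automatically satisfy the metric axioms, and invoke the preceding theorem to conclude that APSP leaves every given edge weight unchanged, so the resulting metric extends the partial data. Your added remarks on connectivity and on the chordal triangulation intuition are sensible but not needed beyond what the cited theorem already supplies.
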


The condition that the given data satisfy a chordal graph appears in Positive Semi-Definite matrix completion and Euclidean distance matrix completion as well. While the first theorem tells us when we can use APSP to complete a metric, it doesn't tell us what properties this new metric satisfies. Hence, the problem of completing a distance matrix for a general metric warrants further investigation. 


In the model scenario 3,  when we have a corrupted distance matrix, we may not always want to increase distances. In some cases we might want to decrease distances. Hence, we would need a general metric repair algorithm. Fan, et al.~\cite{Raichel2018} present an algorithm that runs in $\theta(n^6)$ and Gilbert and Sonthalia~\cite{GilbertSonthalia:GraphMetric2018} present an alternative algorithm that runs in $O(n^5)$. Both of these algorithms are impractical and cannot be used on large data sets. Developing faster algorithms for general metric repair and ascertaining the usefulness of such methods for corrupted distance matrices are two avenues for future work.

\nocite{*}
\bibliography{IEEEabrv,citations} 

\begin{thebibliography}{10}
\providecommand{\url}[1]{#1}
\csname url@rmstyle\endcsname
\providecommand{\newblock}{\relax}
\providecommand{\bibinfo}[2]{#2}
\providecommand\BIBentrySTDinterwordspacing{\spaceskip=0pt\relax}
\providecommand\BIBentryALTinterwordstretchfactor{4}
\providecommand\BIBentryALTinterwordspacing{\spaceskip=\fontdimen2\font plus
\BIBentryALTinterwordstretchfactor\fontdimen3\font minus
  \fontdimen4\font\relax}
\providecommand\BIBforeignlanguage[2]{{%
\expandafter\ifx\csname l@#1\endcsname\relax
\typeout{** WARNING: IEEEtran.bst: No hyphenation pattern has been}%
\typeout{** loaded for the language `#1'. Using the pattern for}%
\typeout{** the default language instead.}%
\else
\language=\csname l@#1\endcsname
\fi
#2}}

\bibitem{Tenenbaum2000}
J.~B. Tenenbaum, V.~d. Silva, and J.~C. Langford, ``A global geometric
  framework for nonlinear dimensionality reduction,'' \emph{Science}, vol. 290,
  no. 5500, pp. 2319--2323, 2000.

\bibitem{Roweis2000}
S.~T. Roweis and L.~K. Saul, ``Nonlinear dimensionality reduction by locally
  linear embedding,'' \emph{Science}, vol. 290, no. 5500, pp. 2323--2326, 2000.

\bibitem{Donoho2003}
D.~L. Donoho and C.~Grimes, ``Hessian eigenmaps: New locally linear embedding
  techniques for high-dimensional data,'' 2003.

\bibitem{Weinberger2004}
K.~Q. Weinberger and L.~K. Saul, ``Unsupervised learning of image manifolds by
  semidefinite programming,'' in \emph{Proceedings of the 2004 IEEE Computer
  Society Conference on Computer Vision and Pattern Recognition}, vol.~2, June
  2004, pp. 988--995.

\bibitem{Coifman2006}
R.~R. Coifman and S.~Lafon, ``Diffusion maps,'' \emph{Applied and Computational
  Harmonic Analysis}, vol.~21, no.~1, pp. 5 -- 30, 2006, special Issue:
  Diffusion Maps and Wavelets.

\bibitem{Belkin2003}
M.~Belkin and P.~Niyogi, ``Laplacian eigenmaps for dimensionality reduction and
  data representation,'' \emph{Neural Comput.}, vol.~15, no.~6, pp. 1373--1396,
  June 2003.

\bibitem{Gilbert2017}
A.~C. {Gilbert} and L.~{Jain}, ``{If it ain't broke, don't fix it: Sparse
  metric repair},'' \emph{ArXiv e-prints}, Oct. 2017.

\bibitem{nlPCA}
M.~Scholz, F.~Kaplan, C.~L. Guy, J.~Kopka, and J.~Selbig, ``Non-linear pca: a
  missing data approach,'' \emph{Bioinformatics}, vol.~21, no.~20, pp.
  3887--3895, 2005.

\bibitem{mDRUR}
M.~Á. Carreira-Perpiñán and Z.~Lu, ``Manifold learning and missing data
  recovery through unsupervised regression,'' \emph{Proceedings - IEEE
  International Conference on Data Mining, ICDM}, pp. 1014--1019, 12 2011.

\bibitem{Gower1985}
J.~C. Gower, ``{Properties of Euclidean and non-Euclidean distance matrices},''
  \emph{Linear Algebra and Its Applications}, vol.~67, no.~C, pp. 81--97, 1985.

\bibitem{EDC1}
S.~Al-Homidan and H.~Wolkowicz, ``Approximate and exact completion problems for
  euclidean distance matrices using semidefinite programming,'' \emph{Linear
  Algebra and its Applications}, vol. 406, pp. 109 -- 141, 2005.

\bibitem{EDC2}
M.~Bakonyi and C.~Johnson, ``The euclidian distance matrix completion
  problem,'' \emph{SIAM Journal on Matrix Analysis and Applications}, vol.~16,
  no.~2, pp. 646--654, 1995.

\bibitem{EDC3}
H.~ren Fang and D.~P. O'Leary, ``Euclidean distance matrix completion
  problems,'' \emph{Optimization Methods and Software}, vol.~27, no. 4-5, pp.
  695--717, 2012.

\bibitem{GilbertSonthalia:GraphMetric2018}
A.~C. Gilbert and R.~Sonthalia, ``Generalized metric repair on graphs,''
  \emph{in submission}, 2018.

\bibitem{Balzano2010}
L.~Balzano and W.~U.~Z. Bajwa, ``Column subset selection with missing data,''
  \emph{NIPS Workshop on Low-Rank Methods for Large-Scale Machine Learning},
  2010.

\bibitem{birge}
B.~Lucien, \emph{An alternative point of view on Lepski's method}, ser. Lecture
  Notes--Monograph Series.\hskip 1em plus 0.5em minus 0.4em\relax Beachwood,
  OH: Institute of Mathematical Statistics, 2001, vol. Volume 36, pp. 113--133.

\bibitem{Bengio2003}
Y.~Bengio, J.-F. Paiement, P.~Vincent, O.~Delalleau, N.~L. Roux, and M.~Ouimet,
  ``Out-of-sample extensions for lle, isomap, mds, eigenmaps, and spectral
  clustering,'' in \emph{Proceedings of the 16th International Conference on
  Neural Information Processing Systems}, ser. NIPS'03.\hskip 1em plus 0.5em
  minus 0.4em\relax Cambridge, MA, USA: MIT Press, 2003, pp. 177--184.

\bibitem{Raichel2018}
C.~Fan, B.~Raichel, and G.~{Van Buskirk}, ``{Metric Violation Distance:
  Hardness and Approximation},'' \emph{Proceedings of the Twenty-Ninth Annual
  ACM-SIAM Symposium on Discrete Algorithms}, pp. 196--209, 2018.

\bibitem{Schoenberg1938}
I.~J. Schoenberg, ``{Metric Spaces and Positive Definite Functions},''
  \emph{Transactions of the American Mathematical Society}, vol.~44, no.~3, p.
  522, 1938.

\bibitem{pmlr-v22-kolar12}
M.~Kolar and H.~Liu, ``Marginal regression for multitask learning,'' in
  \emph{Proceedings of the Fifteenth International Conference on Artificial
  Intelligence and Statistics}, ser. Proceedings of Machine Learning Research,
  N.~D. Lawrence and M.~Girolami, Eds., vol.~22.\hskip 1em plus 0.5em minus
  0.4em\relax La Palma, Canary Islands: PMLR, 21--23 Apr 2012, pp. 647--655.

\bibitem{10.2307/2282952}
W.~Hoeffding, ``Probability inequalities for sums of bounded random
  variables,'' \emph{Journal of the American Statistical Association}, vol.~58,
  no. 301, pp. 13--30, 1963.

\bibitem{Gower1975}
J.~C. Gower, ``Generalized procrustes analysis,'' \emph{Psychometrika},
  vol.~40, no.~1, pp. 33--51, Mar 1975.

\bibitem{Kruskal1964}
J.~B. Kruskal, ``Multidimensional scaling by optimizing goodness of fit to a
  nonmetric hypothesis,'' \emph{Psychometrika}, vol.~29, no.~1, pp. 1--27, Mar
  1964.

\bibitem{Laurent1998}
M.~Laurent, ``{A connection between positive semidefinite and Euclidean
  distance matrix completion problems},'' \emph{Linear Algebra and Its
  Applications}, vol. 273, no. 1-3, pp. 9--22, 1998.

\bibitem{Bellet2013ASO}
A.~Bellet, A.~Habrard, and M.~Sebban, ``A survey on metric learning for feature
  vectors and structured data,'' \emph{CoRR}, vol. abs/1306.6709, 2013.

\bibitem{Wang2015}
F.~Wang and J.~Sun, ``Survey on distance metric learning and dimensionality
  reduction in data mining,'' \emph{Data Mining and Knowledge Discovery},
  vol.~29, no.~2, pp. 534--564, Mar 2015.

\bibitem{Baraty2011}
S.~Baraty, D.~A. Simovici, and C.~Zara, ``The impact of triangular inequality
  violations on medoid-based clustering,'' in \emph{Foundations of Intelligent
  Systems}, M.~Kryszkiewicz, H.~Rybinski, A.~Skowron, and Z.~W. Ra{\'{s}},
  Eds.\hskip 1em plus 0.5em minus 0.4em\relax Berlin, Heidelberg: Springer
  Berlin Heidelberg, 2011, pp. 280--289.

\bibitem{VANDIJK2018}
D.~van Dijk, R.~Sharma, J.~Nainys, K.~Yim, P.~Kathail, A.~J. Carr, C.~Burdziak,
  K.~R. Moon, C.~L. Chaffer, D.~Pattabiraman, B.~Bierie, L.~Mazutis, G.~Wolf,
  S.~Krishnaswamy, and D.~Pe’er, ``Recovering gene interactions from
  single-cell data using data diffusion,'' \emph{Cell}, 2018.

\end{thebibliography}
\bibliographystyle{IEEETran}


\end{document}